\algnewcommand{\IfInline}[2]{\State \algorithmicif\ #1 \algorithmicthen\ ~#2~}
\newtheorem{theorem}{Theorem}
\DeclareMathOperator{\const}{const}
\newcommand\diff{\mathop{}\!\mathrm{d}}
\newcommand\diffn[1]{\mathop{}\!\mathrm{d}^{#1}}
\DeclareRobustCommand{\frac}[3][0pt]{{\begingroup\hspace{#1}#2\hspace{#1}\endgroup\over\hspace{#1}#3\hspace{#1}}} 
\newcommand\Tstrut{\rule{0pt}{2.6ex}}         
\newcommand\Bstrut{\rule[-0.9ex]{0pt}{0pt}}   
\title{Replica-Exchange Nos\'e-Hoover Dynamics \\
           for Bayesian Learning on Large Datasets}
\author{%
  Rui Luo\thanks{Equal contribution.}$\,\:^1$, Qiang Zhang\footnotemark[1]$\,\:^1$, Yaodong Yang$^{2,  }$\thanks{Corresponding author}\ , and Jun Wang$\,\,^{1}$ \\
  \texttt{\{rui.luo, qiang.zhang,  jun.wang\}@cs.ucl.ac.uk}
  \\
  \texttt{\{yaodong.yang\}@huawei.com}
  \\
  $^1$University College London, $^2$Huawei R\&D U.K.\\
}
\begin{document}

\maketitle

\begin{abstract}
In this paper, we present a new practical method for Bayesian learning that can rapidly draw representative samples from complex posterior distributions with multiple isolated modes in the presence of mini-batch noise. 
This is achieved by simulating a collection of replicas in parallel with different temperatures and periodically swapping them. 
When evolving the replicas' states, the Nos\'e-Hoover dynamics is applied, which adaptively neutralizes the mini-batch noise. 
To perform proper exchanges, a new protocol is developed with a noise-aware test of acceptance, by which the detailed balance is reserved in an asymptotic way. 
While its efficacy on complex multimodal posteriors has been illustrated by testing over synthetic distributions, experiments with deep Bayesian neural networks on large-scale datasets have shown its significant improvements over strong baselines.
\end{abstract}

\section{Introduction}
\label{sec:intro}

Bayesian inference is a principled approach to data analysis and provides a natural way of capturing the uncertainty within the quantities of interest~\cite{gelman2013bayesian}. A practical technique of posterior sampling in Bayesian inference is the Markov chain Monte Carlo methods \cite{gilks1995markov}. Albeit successful in a wide-range of applications, the traditional MCMC methods, such as the Metropolis-Hastings (MH) algorithm~\cite{metropolis1953equation,hastings1970monte}, the Gibbs sampler \cite{geman1984stochastic}, and the hybrid/Hamiltonian Monte Carlo (HMC) \cite{duane1987hybrid,neal2011mcmc}, have significant difficulties in dealing with complex probabilistic models with large datasets. The chief issues are two-fold: first, for large datasets, the exploitation of mini-batches results in noise-corrupted gradient information that drives the sampler to deviate from the correct distributions \cite{chen2014stochastic}; second, for complex models, there exists multiple modes, and some might be isolated with others such that the samplers may not discover, leading towards the phenomenon of \emph{pseudo-convergence}~\cite{brooks2011handbook}. 

To tackle the first fold of the chief issues, different approaches have been proposed. Several stochastic methods employing the techniques stemmed from molecular dynamics have been devised to alleviate the influence of mini-batch noise, e.g. the stochastic gradient Langevin dynamics (SGLD) \cite{welling2011bayesian}, the stochastic gradient Hamiltonian Monte Carlo (SGHMC) \cite{chen2014stochastic} and the stochastic gradient Nos\'e-Hoover thermostat (SGNHT) \cite{ding2014bayesian}. Alternatively, the classic MH algorithm has been modified to cope with some approximate detailed balance \cite{korattikara2014austerity,bardenet2017markov,DBLP:conf/uai/SeitaPCC17}. These methods, however, still suffer from the pseudo-convergence problem. 

To address the second fold of the chief issues, the idea of tempering \cite{marinari1992simulated,earl2005parallel,gobbo2015extended} is conceived as a promising framework of solutions. It leverages the finding from statistical physics that a system at high temperature has a better chance to step across energy barriers between isolated modes of the state distribution \cite{landau2013course} and hence enables rapid exploration of the state space. Despite the fact that those samplers based on tempering, such as the replica-exchange Monte Carlo \cite{swendsen1986replica,hukushima1996exchange}, the simulated tempering \cite{marinari1992simulated} and the tempered transition \cite{neal1996sampling}, have shown improvements on sampling complex distributions, the fact that they rely heavily on the exact evaluation of likelihood function, preventing the application on large datasets. Notably, the newly proposed ``thermostat-assisted continuously-tempered Hamiltonian Monte Carlo'' (TACT-HMC) \cite{luo2018thermostat} has attempted to combine the advantage of molecular dynamics and tempering to address the aforementioned chief issues of noise-corrupted gradient and pseudo-convergence. However, its sampling efficiency is relatively low due to the fact that \emph{continuous tempering}, \emph{i.e.} an single-threaded alternative to \emph{replica exchange}, keeps varying the temperature of the inner system; unbiased samples can only be generated when the inner system stays at the unity temperature, which corresponds to only a fraction of entire simulation interval.

To address altogether every facets of the chief issues, \emph{i.e.} the mini-batch gradient and pseudo-convergence, with a higher tempering efficiency, we propose a new sampler, \emph{Replica-exchange Nos\'e-Hoover Dynamics} (RENHD). Our method simulates in parallel a collection of replicas with each at a different temperature. 
It automatically neutralizes the noise arising from mini-batch gradients, by equipping the Nos\'e-Hoover dynamics \cite{evans1985nose} for each replica.
To alleviate pseudo-convergence, RENHD periodically swaps the configurations of replicas, during which a noise-aware test of acceptance is used to keep the detailed balance in an asymptotic fashion.
As for tempering efficiency, our approach keep monitoring the replicas at unity temperature whilst exploring the parameter space in high temperatures, which constantly generate unbiased samples.

Compared to the existing methods,
the novelty of RENHD lies in 1) it is the \textbf{first}
\emph{``practical''} replica-exchange solution applicable to mini-batch settings for large datasets 2) the integration of Nos\'e-Hoover dynamics with replica-exchange framework to enable rapid generation of unbiased samples from complex multimodal distributions, especially when deep neural nets are applied; 3) the elaboration of the noise-aware exchange protocol with Gaussian deconvolution, providing an analytic solution that improves the exchange efficiency and reproducibility. 
By the term \emph{``practical''}, we refer to the merit of our solution that it can be readily implemented, deployed, and utilized.
The integration leads to an ensemble of tempered replicas, each resembling an instance of ``tempered'' SGNHT.
Experiments are conducted to validate the efficacy and demonstrate the state-of-the-art performance in Bayesian posterior sampling tasks with complex multimodal distributions and large datasets; it outperforms the classic baselines by a significant improvement on the accuracy of image classification. 
Notably, the experiment shows that RENHD enjoys a much higher efficiency in generating unbiased samples. Within same real-world time of execution, RENHD produces 4 times as many unbiased multimodal samples as by the tempered alternative TACT-HMC.

\section{Replica-exchange Nos\'e-Hoover Dynamics}
\label{sec:method}

This section gives a detailed description of our \emph{Replica-exchange Nos\'e-Hoover Dynamics} (RENHD), which consists of two alternating subroutines: 1.) dynamics evolution of all replicas in parallel, and 2.) configuration exchange between adjacent replicas given detailed balance guaranteed. 

\subsection{Evolving replicas using the Nos\'e-Hoover dynamics with mini-batch gradient}
\label{subsec:dynamics}


A standard recipe for generating representative samples from $\rho(\theta | \mathscr{D})$ begins with establishing a mechanical system with a point mass moving in $d$-dimensional Euclidean space. The variable of interest $\theta$ is called the system configuration, indicating the particle's position. The target posterior transforms into the potential field $U(\theta) \coloneqq -\log\rho(\theta | \mathscr{D}) + \const$, which defines the energy landscape of the physical system. Intuitively, the force induced by $U(\theta)$ guides the motion of the particle, tracing out the trajectory $\theta(t)$; the snapshots $\{\theta_k\}$ registered periodically from $\theta(t)$ will be examined and accepted probabilistically as new samples.  

As the entire dataset $\mathscr{D}$ is involved in calculating the force $f \coloneqq -\nabla U$, it becomes computationally very expensive or even infeasible when $\mathscr{D}$ grows large. Therefore, for practical purpose, we resort to mini-batches $\mathscr{S} \subset \mathscr{D}$ for big datasets, resulting in noisy estimates approximating the actual force
\begin{align}
\label{eq:f}
\tilde{f}(\theta) \coloneqq \nabla\log\rho(\theta) + \frac[1pt]{|\mathscr{D}|}{|\mathscr{S}|}\sum_{x \in \mathscr{S}}\nabla\log\ell(\theta; x) \approx f(\theta).
\end{align}
It is clear that $\tilde{f}$ is an unbiased estimator of $f$ given that each $x \in \mathscr{D}$ is \emph{i.i.d.} As sum of independent random variables, $\tilde{f}$ converges to Gaussian asymptotically by Central Limit Theorem (CLT) such that
\begin{align}
\label{eq:fdt}
&\tilde{f}(\theta) \diff{t} = f(\theta) \diff{t} + \mathscr{N}(0, 2B\diff{t})~~~\mbox{with}~B \coloneqq \mathbf{var}[\tilde{f}] \diff{t}\big/2~\mbox{defining the noise intensity}.
\end{align}
We assume the variance of $\tilde{f}$ being a constant value due to its $\theta$-independence verified by~\cite{DBLP:conf/iclr/ChaudhariS18} and isotropic in all dimensions for $\theta$'s symmetric nature as suggested in Ding et al.'s scheme \cite{ding2014bayesian}.




Now we construct an increasing ladder of temperatures $\{T_j\}$. On each rung $j$, we instantiate a replica of the physical system established previously. For each replica $j$, a set of dynamic variables is defined, which we refer to as the system state $\Gamma_j = (\theta_j, p_j, \xi_j)$, with $p_j \in \mathbb{R}^d$ being $\theta_j$'s conjugate momentum and $\xi \in \mathbb{R}$ denoting the Nos\'e-Hoover thermostat \cite{nose1984unified,hoover1985canonical} for adaptive noise dissipation \cite{jones2011adaptive}. After configuring unity mass, there is only \emph{one}  ``replica-specific'' constant to be further assigned, \emph{i.e.} the temperature $T_j$. We define the time evolution of $\Gamma_j$ as a variant of the Nos\'e-Hoover dynamics \cite{evans1985nose}:
{
\begin{align}
\label{eq:nhd}
\diff{\theta_j} = p_j \diff{t},~~~~
\diff{p_j} = \big[ \tilde{f}(\theta_j) - (\xi_j + \chi)p_j \big]\diff{t} + \mathscr{N}(0, 2\chi T_j\diff{t}),~~~~
\diff{\xi_j} = \big[ p_j^{\top}p_j - T_jd \big] \diff{t}.
\end{align}
}%
where $\chi\in\mathbb{R}_+$ is the intensity determining the Langevin background noise. 
This variant in \eqref{eq:nhd} is originally proposed as the \emph{``adaptive Langevin dynamics''} (Ad-Langevin) \cite{jones2011adaptive}, in which the vanilla Nos\'e-Hoover dynamics is blended with the second-order Langevin dynamics. Essentially, we have both the time-varying Nos\'e-Hoover thermostat $\xi_j$ for adaptive adjustment for the dynamics in the presence of stochastic gradient noise, and the time-invariant Langevin background noise that promotes replicas' ergodicity in simulation.
The following theorem validates the efficacy of dynamics in \eqref{eq:nhd}.

\begin{theorem}[\cite{jones2011adaptive}]
\label{thm:invariant}
The dynamics defined in \eqref{eq:nhd} ensures the distribution of $\Gamma_j$ converging to the unique stationary distribution
{
\begin{align}
\label{eq:invariant}
&\pi_j(\Gamma_j) \propto e^{-\frac{U(\theta_j)\,+\,p_j^{\top}p_j/2 }{T_j}} \exp\bigg[ -\frac{(\xi_j - \nicefrac{B}{T_j})^2}{2T_j} \bigg],
\end{align}
}%
if $j$ is ergodic. The intensity $B$ of gradient noise is in \eqref{eq:fdt}.
\end{theorem}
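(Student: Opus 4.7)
The plan is to verify that $\pi_j$ solves the stationary Fokker--Planck equation associated with the SDE system \eqref{eq:nhd}, and then appeal to the stated ergodicity assumption to conclude uniqueness. First I would promote the noisy gradient $\tilde{f}$ to a genuine diffusion via the CLT statement in \eqref{eq:fdt}, so that the $p_j$-equation reads $\diff{p_j} = [f(\theta_j) - (\xi_j+\chi)p_j]\diff{t} + \mathscr{N}(0, 2(B+\chi T_j)\diff{t})$ with effective diffusion coefficient $D_j \coloneqq B + \chi T_j$. The corresponding Fokker--Planck adjoint on a density $\pi(\theta_j,p_j,\xi_j)$ is
\[
\mathcal{L}^{*}\pi = -p_j\cdot\nabla_{\theta_j}\pi - \nabla_{p_j}\cdot\bigl([f-(\xi_j+\chi)p_j]\pi\bigr) - \partial_{\xi_j}\bigl((p_j^{\top}p_j - T_j d)\pi\bigr) + D_j\,\Delta_{p_j}\pi,
\]
and stationarity reduces to the identity $\mathcal{L}^{*}\pi_j = 0$.

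Substituting the ansatz into this identity, the derivatives I would need are $\nabla_{\theta_j}\pi_j = (f/T_j)\pi_j$, $\nabla_{p_j}\pi_j = -(p_j/T_j)\pi_j$, $\partial_{\xi_j}\pi_j = -((\xi_j - B/T_j)/T_j)\pi_j$, and $\Delta_{p_j}\pi_j = (-d/T_j + p_j^{\top}p_j/T_j^2)\pi_j$. The transport term $-p_j\cdot\nabla_{\theta_j}\pi_j$ cancels the force contribution hidden in $-\nabla_{p_j}\cdot(f\pi_j)$, which is the familiar Hamiltonian cancellation. The remaining pieces split into $\xi_j$-dependent and $\xi_j$-independent parts. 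Expanding $(\xi_j+\chi)(d - p_j^{\top}p_j/T_j)\pi_j$ together with the $\xi_j$-linear term produced by $\partial_{\xi_j}((p_j^{\top}p_j - T_j d)\pi_j)$, all contributions linear in $\xi_j$ collapse, because the drift for $\xi_j$ is precisely the kinetic-energy deviation from equipartition $T_j d$. What is left is a constant-in-$\xi_j$ residual that is killed by $D_j\,\Delta_{p_j}\pi_j$; this last cancellation succeeds exactly because the mean of the $\xi_j$ Gaussian is chosen to be $B/T_j$, which encodes the fluctuation--dissipation identity $(\xi_j+\chi)T_j = B+\chi T_j$ at $\xi_j = B/T_j$---the very balance that the adaptive thermostat enforces on average.

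Once $\mathcal{L}^{*}\pi_j = 0$ has been established, uniqueness of the invariant measure follows from the ergodicity hypothesis, since any two invariant probability measures of an ergodic diffusion coincide. The main obstacle I expect is the algebraic bookkeeping in the substitution step: the $\xi_j$--$p_j$ coupling produces several cross-terms that look problematic until one groups them with the thermostat drift. A secondary subtlety, easy to overlook, is that the diffusion coefficient on $p_j$ is genuinely $B + \chi T_j$ and not $B$ or $\chi T_j$ alone; the two noise sources add in variance because the mini-batch fluctuation in \eqref{eq:fdt} and the Langevin white noise are independent, a fact implicit in \eqref{eq:nhd} that must be used explicitly when assembling the Fokker--Planck operator.
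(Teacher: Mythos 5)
Your proposal is correct: the Fokker--Planck stationarity check with effective momentum diffusion $D_j=B+\chi T_j$, the Hamiltonian cancellation, and the fluctuation--dissipation balance at $\xi_j=B/T_j$ is exactly the standard argument for adaptive (Nos\'e--Hoover) Langevin dynamics that the paper defers to its appendix and to Jones--Leimkuhler. The algebra checks out --- the residual collapses to $(d-p_j^{\top}p_j/T_j)\,[\chi+B/T_j-D_j/T_j]=0$ --- and your use of ergodicity for uniqueness matches the theorem's hypothesis, so no gaps.
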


\begin{proof}
The details can be found in Appendix.
\end{proof}


We simulate the time evolution of all replicas $\{j\}$ in parallel using the dynamics in \eqref{eq:nhd} until converged. A quick observation on \eqref{eq:invariant} reveals the fact that all replicas share the same functional form of $\pi_j(\Gamma_j)$; the discrepancy between one invariant distribution and another is merely the result of different temperatures. Considering replica $j$ at temperature $T_j$, one can easily obtain the invariant distribution of $\theta_j$ by marginalizing \eqref{eq:invariant} \emph{w.r.t.} $p_j$ and $\xi_j$:
\begin{align}
\label{eq:configuration}
\pi_j(\theta_j) \propto \int \pi_j(\Gamma_j) \diff{p_j}\diff{\xi_j} \propto e^{-U(\theta_j)/T_j},
\end{align}
where the ``effective'' potential at $T_j$ is essentially the actual potential $U$ rescaled by a factor of $\nicefrac{1}{T_j}$. 
A physical interpretation is that the energy barriers separating isolated modes are effectively lowered and hence easier to overcome at high temperatures. Consequently, replicas at higher $T$'s enjoy more efficient exploration of $\theta$-space. On the other hand, for the ``replica 0'', \emph{i.e.} the one at unity temperature $T_0 = 1$, the marginal reads $\pi(\theta) \propto e^{-U(\theta)} \propto \rho(\theta | \mathscr{D})$, recovering the target posterior.

\paragraph{Comparison to Langevin dynamics.}
As an alternative to the Nos\'e-Hoover dynamics, the Langevin dynamics, or the Langevin equation \cite{Langevin:1908:TDM} in physics, lays the foundation of Langevin's approach to modeling molecular systems with stochastic perturbations. Langevin dynamics in its complete form is presented as a second-order stochastic differential equation, which is equivalent to the dynamics described in SGHMC. Its complexity in simulation is roughly the same as the Nos\'e-Hoover's equation system in \eqref{eq:nhd}. 
SGLD, on the other hand, uses the overdamped Langevin equation, \emph{i.e.} a simplified version with mass limiting towards zero. Albeit simpler, SGLD can be drastically slower in exploring distant regions in $\theta$-space due to its random-walk-like updates \cite{chen2014stochastic}, rendering it inapt for multimodal sampling. The major advantage of the Nos\'e-Hoover dynamics over Langevin's approach, is that the former method adaptively neutralizes the effect of noisy gradient using a simple dynamic variable whereas the latter requires an additional subroutine for noise estimation, which is expensive and needs to be performed manually \cite{ahn2012bayesian}. In other words, the Nos\'e-Hoover thermostat saves us from expensive manual noise estimation with the least cost; it works well with minimal prior knowledge.


\subsection{Exchanging replicas by logistic test with mini-batch estimates of potential differences}
\label{subsec:protocol}

As we have investigated, replicas at high temperatures have better chances to transit between modes, leading to faster exploration of $\theta$-space. However, such advantage comes at a price that the sampling is no longer correct: the spectrum of sampled distribution widens in proportional to the square root of replica's temperature. It implies that the samples shall only be drawn at unity temperature.

To enable rapid $\theta$-space exploration for high-temperature replicas while retaining accurate sampling, a new protocol is devised that periodically swaps the configurations of replicas, with the compatibility of mini-batch settings; the term ``replica exchange'' refers to the operations that swap $\theta$'s.

The protocol, as a non-physical probabilistic procedure, is built on the principle of detailed balance, \emph{i.e.} every exchange shall be equilibrated by its reverse counterpart, or formally
\begin{align}
\label{eq:detailed-balance}
\pi_j(\theta_j)\pi_k(\theta_k)&\alpha_{jk}\big[(\theta_j,\theta_k) \to (\theta_k,\theta_j) \big] = \pi_j(\theta_k)\pi_k(\theta_j)\alpha_{jk}\big[(\theta_j,\theta_k) \gets (\theta_k,\theta_j) \big].
\end{align}
The left side corresponds to a forward exchange and the right side is for its backward counterpart. $\pi_j$ in \eqref{eq:detailed-balance} represents the probability of replica $j$ being in some certain configuration (\emph{cf.} \eqref{eq:configuration}), and $\alpha_{jk}$ is the probability of a successful exchange (either forward or backward) between the configurations of the pair of two replicas $(j, k)$. Such probability is determined by a test of acceptance, which is a criterion associated with the ratio of probabilities $\pi_j(\theta_k)\pi_k(\theta_j) \big/ \pi_j(\theta_j)\pi_k(\theta_k)$. Given \eqref{eq:configuration}, the probability ratio can be calculated from the potential difference $\Delta E_{jk} \coloneqq \big[ U(\theta_j) - U(\theta_k) \big] \big[ 1/T_j - 1/T_k \big]$.

When switching to mini-batches, the actual potential difference $\Delta E_{jk}$ is no longer accessible; instead, we only obtain a noisy estimate of the potential difference
{
\begin{align}
\label{eq:DeltaE}
\Delta\tilde{E}_{jk} \coloneqq \bigg[ \frac[1pt]{1}{T_j} - \frac[1pt]{1}{T_k} \bigg] \bigg[ \log\frac{\rho(\theta_j)}{\rho(\theta_k)} + \frac{|\mathscr{D}|}{|\mathscr{S}|}\sum_{x \in \mathscr{S}}\log\frac{\ell(\theta_j; x)}{\ell(\theta_k; x)} \bigg],
\end{align}
}%
which is essentially a random variable centered at $\Delta E_{jk}$. Moreover, $ \Delta\tilde{E}_{jk}$ converges asymptotically to Gaussian as indicated by CLT; the following factorization is applicable
{
\begin{align}
\label{eq:factorization}
\Delta\tilde{E}_{jk} = \Delta E_{jk} + z_{\mathscr{N}},~z_{\mathscr{N}} \sim \mathscr{N}(0, \sigma^2),~\mbox{and}~\sigma^2 \coloneqq \mathbf{var}[\Delta\tilde{E}_{jk}].
\end{align}
}%
Now we introduce an auxiliary variable $z_{\mathscr{C}}$ to fix the corrupted estimate $\Delta\tilde{E}_{jk}$; its distribution, which we refer to as the \emph{compensation distribution}, is formulated as
\begin{align}
\label{eq:compensation}
&\tilde{q}_{\mathscr{C}}(z) \propto \sum_{n=0}^{\infty} \frac{(-1)^n}{\lambda^nn!}  \cdot H_n\bigg[\frac{\lambda\sigma^2}{4}\bigg] \cdot g^{(2n+1)}(z),~~\mbox{with}~\sigma^2 = \mathbf{var}[\Delta\tilde{E}_{jk}],
\end{align}
where $H_n[\cdot]$ represents the Hermite polynomials \cite{abramowitz1965handbook} and $g \coloneqq \nicefrac{1}{1 + e^{-z}}$ defines the logistic function. We denote $g^{(k)} \coloneqq \nicefrac{\diffn{k}{g}}{\diff{z}^k}$ as the $k$-th derivative of $g$. The argument $\lambda$ controlling the ``bandwidth'' is crucial according to the following theorem.

\begin{theorem}
\label{thm:exchange}
Given a pair of replicas $(j, k)$ currently being at the configurations $(\theta_j, \theta_k)$, the exchange $(\theta_j, \theta_k) \to (\theta_k, \theta_j)$ preserves formally the condition of detailed balance, if one admits the attempt of exchange under the criterion
{
\begin{align}
\label{eq:condition}
z_{\mathscr{C}} + \Delta\tilde{E}_{jk} > 0,
\end{align}
}%
where $z_{\mathscr{C}} \sim \tilde{q}_{\mathscr{C}}$ in \eqref{eq:compensation} and evaluated in the limit of $\lambda \to +\infty$. The estimate $\Delta\tilde{E}_{jk}$ is defined in \eqref{eq:DeltaE}.
\end{theorem}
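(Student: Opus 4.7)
The plan is to show that the noisy criterion~\eqref{eq:condition} induces a \emph{logistic} acceptance probability $\alpha_{jk} = g(\Delta E_{jk})$, from which detailed balance follows almost tautologically. Because the swap proposal is symmetric, \eqref{eq:detailed-balance} reduces to the ratio condition $\alpha_{\to}/\alpha_{\gets} = e^{\Delta E_{jk}}$, and the sigmoid satisfies the identity $g(x)/g(-x) = e^{x}$ (since $g(-x) = e^{-x} g(x)$). Hence the theorem is equivalent to establishing
\[ \mathbb{P}\bigl[\, z_{\mathscr{C}} + \Delta\tilde{E}_{jk} > 0 \,\bigr] \;=\; g(\Delta E_{jk}) \quad \text{for every } \Delta E_{jk}, \]
with $z_{\mathscr{C}} \sim \tilde{q}_{\mathscr{C}}$ drawn independently of the mini-batch. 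Substituting the Gaussian factorization~\eqref{eq:factorization} and writing $Z := z_{\mathscr{C}} + z_{\mathscr{N}}$, this collapses to the single statement that $Z$ has CDF $g$, i.e.\ that $Z$ is standard logistic.

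The core technical step is therefore a one-dimensional Gaussian deconvolution: choose $\tilde{q}_{\mathscr{C}}$ so that $\tilde{q}_{\mathscr{C}} \ast \phi_{\sigma^{2}} = g'$, where $\phi_{\sigma^{2}}$ is the centered Gaussian density with variance $\sigma^{2}$. Passing to Fourier transforms gives $\widehat{\tilde{q}}_{\mathscr{C}}(t) = \widehat{g'}(t) \cdot e^{\sigma^{2} t^{2}/2}$; expanding the inverse Gaussian factor as $\sum_{n} (\sigma^{2}/2)^{n} t^{2n}/n!$ and inverting term by term (using $t^{2n} \leftrightarrow (-1)^{n}\partial_{z}^{2n}$ and $\partial_{z} g = g'$) yields the formal series
\[ \tilde{q}_{\mathscr{C}}(z) \;=\; \sum_{n=0}^{\infty} \frac{(-1)^{n}}{n!} \Bigl(\frac{\sigma^{2}}{2}\Bigr)^{\!n} g^{(2n+1)}(z). \]
To match this with the $\lambda \to \infty$ limit of~\eqref{eq:compensation}, I would invoke the leading-order asymptotics of the (physicists') Hermite polynomials, $H_{n}(y) \sim (2y)^{n}$ as $y \to \infty$, which gives $\lambda^{-n} H_{n}(\lambda \sigma^{2}/4) \to (\sigma^{2}/2)^{n}$ and makes the two series agree coefficient by coefficient.

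The main obstacle is that this deconvolution is genuinely ill-posed: because $e^{\sigma^{2} t^{2}/2}$ grows super-polynomially, $\widehat{\tilde{q}}_{\mathscr{C}}$ is not a bona fide characteristic function and no proper density exists for any $\sigma > 0$. The parameter $\lambda$ plays the role of a regularizer---at finite $\lambda$ the Hermite coefficients remain bounded and the partial sums define legitimate signed measures---but interchanging the $\lambda \to \infty$ limit with the convolution against $\phi_{\sigma^{2}}$ cannot be justified classically. This is precisely why the theorem hedges the conclusion as \emph{formal} / asymptotic: a rigorous interpretation would read the deconvolved density in the sense of tempered distributions, or as a truncated Edgeworth-type expansion whose error vanishes with the order, which is also how one uses it in practice.
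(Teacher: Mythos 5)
Your proposal is correct and follows essentially the same route as the paper's proof: Barker's logistic acceptance test, reduction of detailed balance to the compensation condition $z_{\mathscr{C}} + z_{\mathscr{N}} \stackrel{d}{=} z_{\mathscr{L}}$, Gaussian deconvolution in the Fourier domain, and term-by-term inversion into odd logistic derivatives $g^{(2n+1)}$. The only (cosmetic) difference is that you expand the bare inverse Gaussian factor $e^{\sigma^2\omega^2/2}$ directly and recover the paper's coefficients via the leading-order Hermite asymptotics $\lambda^{-n}H_n(\lambda\sigma^2/4)\to(\sigma^2/2)^n$, whereas the paper first multiplies by the super-smooth kernel $\psi_\lambda(\omega)=e^{-\omega^4/\lambda^2}$ and uses the Hermite generating function at finite $\lambda$ before letting $\lambda\to+\infty$; your explicit remark that the deconvolution is ill-posed and the conclusion only formal matches the paper's own caveat about the ``improper'' $q_{\mathscr{C}}$.
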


\begin{proof}
Given Barker's acceptance test \cite{barker1965monte}, the acceptance probability for detailed balance \eqref{eq:detailed-balance} reads
{
\begin{align}
&\alpha_{jk}^{\mathrm{B}}\big[(\theta_j, \theta_k) \to (\theta_k, \theta_j)\big] \coloneqq  1\big/\big[1 + e^{ -\Delta E_{jk} }\big]. \notag
\end{align}
}%
Note that $\alpha^{\mathrm{B}}_{jk}$ is in the form of the logistic function $g$ in $\Delta E_{jk}$, which leads to the standard logistic distribution $\mathscr{L}(0, 1)$. The corresponding criterion working with full-batch potential difference $\Delta E_{jk}$
\begin{align}
\label{eq:criterion}
z_{\mathscr{L}} + \Delta E_{jk} > 0,~\mbox{with}~z_{\mathscr{L}} \sim \mathscr{L}(0, 1).
\end{align}
When using mini-batches, the noisy estimate $\Delta\tilde{E}_{jk}$ kicks in as ``corrupted'' measurements of the actual value $\Delta E_{jk}$ under Gaussian perturbations $z_{\mathscr{N}}$ as derived in \eqref{eq:factorization}. The mini-batch criterion is devised by decomposing the logistic variable $z_{\mathscr{L}}$ in \eqref{eq:criterion} as suggested by Seita \emph{et al.} \cite{DBLP:conf/uai/SeitaPCC17}:
{
\begin{align}
&z_{\mathscr{L}} + \Delta E_{jk} > 0 \implies z_{\mathscr{C}} + \Delta\tilde{E}_{jk} > 0 \tag{\ref{eq:condition}'}~~~~\mbox{with}~~\mathrm{Pr}[ z_{\mathscr{L}} > -\Delta E_{jk} ] = \mathrm{Pr}[ z_{\mathscr{C}} + z_{\mathscr{N}} > -\Delta E_{jk} ],
\end{align}
}%
where the new variable $z_{\mathscr{C}}$ \emph{compensates} the discrepancy between the logistic variable for the test $z_{\mathscr{L}}$ and the Gaussian perturbations $z_{\mathscr{N}}$, and the factorization in \eqref{eq:factorization} is applied. 

The compensation distribution $q_{\mathscr{C}}$ for $z_{\mathscr{C}}$ is defined in an implicit way by the convolution equation
\begin{align}
\label{eq:convolution}
q_{\mathscr{L}}(z) = q_{\mathscr{C}}(z) * q_{\mathscr{N}_{\sigma^2}}(z),
\end{align}
where $q_{\mathscr{L}}$ defines the standard logistic density $\mathscr{L}(0,1)$ while $q_{\mathscr{N}_{\sigma^2}}$ is the Gaussian $\mathscr{N}(0, \sigma^2)$. 

The convolutional relation \eqref{eq:convolution} defines a Gaussian deconvolution problem \cite{jones1967practical} \emph{w.r.t.} the standard logistic distribution, where the Fourier transform can be applied, converting the convolution of densities into the corresponding algebraic equation of characteristics functions:
{
\begin{align}
\label{eq:algebraic}
&\phi_{\mathscr{L}}(\omega) = \phi_{\mathscr{C}}(\omega) \cdot \phi_{\mathscr{N}_{\sigma^2}}(\omega)~~~~\mbox{where}~~\phi_{\mathscr{L}}(\omega) = \frac{\pi\omega}{\sinh{\pi\omega}}~~\mbox{and}~~\phi_{\mathscr{N}_{\sigma^2}}(\omega) = \exp\bigg[-\frac{\sigma^2\omega^2}{2}\bigg].
\end{align}
}%


\begin{figure*}[t]
\centering
\includegraphics[width=1.00\linewidth]{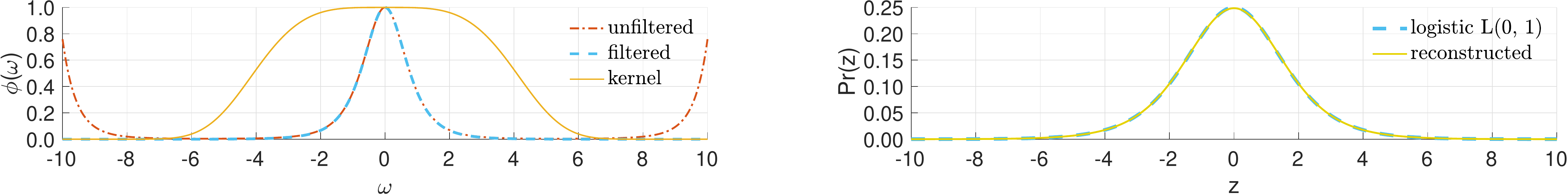}
\vskip -5pt%
\caption{(\emph{colored}) The \emph{left} subplot shows the divergence (\emph{red}) of real spectrum ratio in evaluating \eqref{eq:algebraic}; it begins to diverge at $|\omega| > 7$. A super-smooth kernel (\emph{solid orange}) is applied by multiplication on the ratio and the composite spectrum (\emph{blue}) is guaranteed to converge. The \emph{right} one compares the desired standard logistic density (the left side of \eqref{eq:convolution}, \emph{dashed blue}) with the reconstruction (the right side of \eqref{eq:convolution}, \emph{solid yellow}), indicating a good precision. The variance of the Gaussian perturbation is $\sigma^2 = 0.2$ while the bandwidth is $\lambda = 10$; the reconstructed density with first 3 terms in the series \eqref{eq:compensation}.}
\label{fig:deconv}
\vskip 5pt%
\end{figure*}


Due to the fact that the Gaussian characteristic function $\phi_{\mathscr{N}_{\sigma^2}}$ decays much faster than the logistic counterpart $q_{\mathscr{L}}$ at high frequencies, the spectrum of $\phi_{\mathscr{C}}$ diverges at infinity such that direct approaches might not apply. 
Therefore, we design a symmetric multiplicative kernel $0 < \psi_{\lambda}(\omega) \le 1$ peaked at the origin with rapid decay; it is parameterized by the ``bandwidth'' $\lambda > 0$. The inverse Fourier transform with kernels is shown as (see \cite{fan1991optimal})
{
\begin{align}
\label{eq:approx}
\tilde{q}_{\mathscr{C}}(z) &\coloneqq \mathscr{F}^{-1}\big[\psi_{\lambda}(\omega) \cdot \phi_{\mathscr{C}}(\omega)\big] = \frac{1}{2\pi} \int_{-\infty}^{\infty} \bigg[ \frac{\psi_{\lambda}(\omega)}{\phi_{\mathscr{N}_{\sigma^2}}(\omega)} \bigg] \phi_{\mathscr{L}}(\omega) e^{-iz\omega} \diff{\omega},
\end{align}
}
where $\psi_{\lambda}(\omega) \coloneqq e^{-\nicefrac{\omega^4}{\lambda^2}}$ serves as a ``low-pass filter'' selecting components within a finite bandwidth and suppressing high frequencies so that the spectrum of $\phi_{\mathscr{C}}$ will eventually vanish at infinity and an solution can therefore be guaranteed. 

To solve for an series solution to $\tilde{q}_{\mathscr{C}}$, consider the ratio in the brackets in \eqref{eq:approx}, it is the generating function of Hermite polynomials $H_n[u]$ (see \cite{abramowitz1965handbook}):
{
\begin{align}
\label{eq:hermite}
\frac{\psi_{\lambda}(\omega)}{\phi_{\mathscr{N}_{\sigma^2}}(\omega)} &= \frac{e^{-\nicefrac{\omega^4}{\lambda^2}}}{e^{-\nicefrac{\sigma^2 \omega^2}{2}}} = \exp\bigg[ 2\,\Big(\frac{\lambda\sigma^2}{4}\Big) \cdot \Big(\frac[1pt]{\omega^2}{\lambda}\Big) - \Big(\frac[1pt]{\omega^2}{\lambda}\Big)^2 \bigg] = \sum_{n=0}^{\infty} \frac{1}{\lambda^nn!} \cdot H_n\bigg[\frac{\lambda\sigma^2}{4}\bigg] \cdot\omega^{2n},
\end{align}
}%
where we utilized the exponential generating function with $H_n[u]$: $\exp(2ut - t^2) = \sum_{n=0} H_n[u]\,t^n/n!$. Now we substitute \eqref{eq:hermite} back into \eqref{eq:approx} and rearrange terms, which results in the series solution:
{
\begin{align}
&\tilde{q}_{\mathscr{C}}(z) \propto \frac[1pt]{1}{2\pi} \int_{-\infty}^{\infty} \bigg[ \frac{1}{\lambda^nn!} \cdot H_n\Big[\frac{\lambda\sigma^2}{4}\Big] \cdot\omega^{2n} \bigg] \,\phi_{\mathscr{L}}(\omega) e^{-iz\omega} \diff{\omega} \notag\\
&~~~\qquad= \sum_{n=0}^{\infty} \Bigg\{\,\frac{(-1)^n}{\lambda^nn!} \cdot H_n\Big[\frac{\lambda\sigma^2}{4}\Big] \cdot \bigg[ \frac[1pt]{1}{2\pi} \int_{-\infty}^{\infty} (i\omega)^{2n} \phi_{\mathscr{L}}(\omega) e^{-iz\omega} \diff{\omega} \bigg]\,\Bigg\} \notag\\
\label{eq:fourier-deriv}
&~~~\qquad= \sum_{n=0}^{\infty} \frac{(-1)^n}{\lambda^nn!} \cdot H_n\Big[\frac{\lambda\sigma^2}{4}\Big] \cdot \frac{\diffn{2n}}{\diff{z}^{2n}}q_{\mathscr{L}}(z) = \sum_{n=0}^{\infty} \frac{(-1)^n}{\lambda^nn!} \cdot H_n\Big[\frac{\lambda\sigma^2}{4}\Big] \cdot g^{(2n+1)}(z),
\end{align}
}%
where in equity \eqref{eq:fourier-deriv}, we exploited the nice property of the Fourier transform regarding derivatives. Note that $g(z)$ is the logistic function defined in \eqref{eq:compensation}.

As $\lambda$ grows larger, the kernel gains a higher bandwidth, and the series solution $\tilde{q}_{\mathscr{C}}$ gets closer in an asymptotic manner to the improper compensation distribution $q_{\mathscr{C}}$ defined in \eqref{eq:convolution}. In other words, in the limit of infinite bandwidth $\lambda \to +\infty$, the formal compensation distribution is achieved, \emph{i.e.} $\tilde{q}_{\mathscr{C}} \to q_{\mathscr{C}}$; the detailed balance is ensured under the criterion \eqref{eq:condition} with $z_{\mathscr{C}} \sim \tilde{q}_{\mathscr{C}}$ in \eqref{eq:compensation}.
\end{proof}

Instead of launching brutal-force attack with an arsenal of numerical solvers, we have provided with an analytic treatment that is more efficient and easier to reproduce. In practice, for any level of precision, we can always find a suitable finite bandwidth $\lambda < +\infty$, with which the analytic series solution $\tilde{q}_{\mathscr{C}}$ in \eqref{eq:compensation} maintains a desired precision of approximation to the actual improper $q_{\mathscr{C}}$. Figure \ref{fig:deconv} provides with a illustration of the effect of the multiplicative kernel and a comparison between the desired standard logistic density and the reconstruction of convolving the Gaussian perturbations with the series solution $\tilde{q}_{\mathscr{C}}$ calculated in \eqref{eq:compensation}.



\paragraph{Preference of logistic test.}
The advantage of Barker's logistic test over its Metropolis counterpart lies in the super-smooth nature of the logistic function. Smoothness ensures the existence of smooth derivatives of infinite orders, which facilitates analytic formulations with infinite series, especially for problems involving deconvolutions. The Metropolis test, albeit more efficient \cite{hastings1970monte}, is composed by non-smooth operations, which inevitably introduces Delta functions that sabotage the analyticity. 

\paragraph{Selection of the bandwidth $\lambda$.}
The bandwidth $\lambda$ governs the accuracy of sampling by determining the quality of the approximation $\tilde{q}_{\mathscr{C}}(z)$ on the compensation distribution: with higher $\lambda$, $\tilde{q}_{\mathscr{C}}(z)$ becomes more precise; with better $\tilde{q}_{\mathscr{C}}(z)$, the detailed balance can be better preserved, which will lead to more accurate samples. Indeed, higher $\lambda$ results in more time-consuming computation, but we noticed that with relatively small sample variance $\sigma^2$, as is often the case in practice, the accuracy and the computational complexity can be well-balanced by assigning $\lambda$ a moderate value.

\subsection{Revisiting the assumption of Gaussianity}
\label{subsec:gaussianity}

The assumption of Gaussianity lays the foundation of Nos\'e-Hoover dynamics; it is often assumed \emph{a priori} in the name of CLT \cite{mandt2017stochastic}. Recently, some critiques arise about this assumption: the Gaussianity of mini-batch gradients in training AlexNet \cite{krizhevsky2012imagenet} seems to be complicated as revealed by the tail index analysis \cite{simsekli2019tail}. It turns out that for AlexNet and the fully-connected networks, the gradient noise shows some phenomena of heavy-tailed random variables. On the other hand, we examined some other architectures, namely the residual networks (ResNet) \cite{he2016deep} as well as the classic LSTM architecture \cite{hochreiter1997long}, using a conventional recipe, \emph{i.e.} Shapiro-Wilk algorithm \cite{shapiro1965analysis}. The result indicates for those architectures, the Gaussianity of gradient noise is well-preserved with typical batch sizes in practice (see Appendix C). The following experiments are conducted using those architectures with Gaussianity preserved, because the proposed method is built on Gaussian noise.




\section{Implementation}
\label{sec:impl}

This section is devoted to the implementation of RENHD in practical scenarios. First, we devise the replica-exchange protocol based on Theorem \ref{thm:exchange}. Particular attention will be paid to the computation of the series solution $\tilde{q}_{\mathscr{C}}$ in \eqref{eq:compensation}. Recall $\tilde{q}_{\mathscr{C}}$, it depends on two parameters: the variance of mini-batch estimate $\sigma^2 = \mathbf{var}[\Delta\tilde{E}_{jk}]$ and the bandwidth of kernel $\lambda$. For each attempt of exchange, the mini-batch is different, thus $\tilde{q}_{\mathscr{C}}$ needs re-evaluation due to the flunctuations of the variance $\sigma^2$.





\begin{algorithm}[t]
\caption{The replica-exchange protocol}
\label{alg:protocol}
{
\begin{algorithmic}[1]
\Function{Exchange}{$\theta_j, \theta_k, \mathtt{model, }\mathscr{D}, |\mathscr{S}|_{\mathrm{re}}, \sigma^2_*, \hat{q}_{\mathscr{C}}$}
\Repeat
\State{$\mathscr{S} \gets \big[\mathscr{S}, \Call{nextBatch}{\mathscr{D}, |\mathscr{S}|_{\mathrm{ex}}}\big]$ then eval $\Delta\tilde{E}_{jk}$ by \eqref{eq:DeltaE} with $\mathscr{S}$, $\mathtt{model}.\rho()$, $\mathtt{model}.\ell()$}
\Until{$\mathbf{var}[\Delta\tilde{E}_{jk}] < \sigma^2_*$}
\State{$z_{\mathscr{N}_*} \sim \mathscr{N}(0, \sigma^2_* - \mathbf{var}[\Delta\tilde{E}_{jk}])$}\Comment{ensuring total \textbf{var} $\approx \sigma^2_*$}
\State{$z_{\mathscr{C}} \sim \hat{q}_{\mathscr{C}}$}\Comment{Gibbs or HMC on $\hat{q}_{\mathscr{C}}$}
\If{$z_{\mathscr{C}} + z_{\mathscr{N}_*} + \Delta\tilde{E} > 0$}:\Comment{checking criterion \eqref{eq:condition-impl}}
\State{$(\theta_j, \theta_k) \gets (\theta_k, \theta_j)$}\Comment{swapping configurations}
\EndIf
\EndFunction
\end{algorithmic}
}%
\end{algorithm}


To avoid the consuming re-computation of $\tilde{q}_{\mathscr{C}}$, we would like to reuse a fixed $\hat{q}_{\mathscr{C}}$ throughout the entire sampling process. We set a mild variance threshold $\sigma^2_*$ and an appropriate bandwidth $\lambda$, then compute $\hat{q}_{\mathscr{C}}$ before sampling. During the process, when an attempt of exchange is proposed, we make sure the variance of estimates is below the threshold $\mathbf{var}[\Delta\tilde{E}_{jk}] < \sigma^2_*$ by enlarging the mini-batches, and then make up for the difference $\sigma^2_* - \mathbf{var}[\Delta\tilde{E}_{jk}]$ using an additive Gaussian noise such that the overall variance is exactly $\sigma^2_*$. The modified criterion is hence
\begin{align}
\label{eq:condition-impl}
&z_{\mathscr{C}} + z_{\mathscr{N}_*} + \Delta\tilde{E} > 0,~~~~\mbox{with}~~z_{\mathscr{C}} \sim \hat{q}_{\mathscr{C}},~z_{\mathscr{N}_*} \sim \mathscr{N}(0, \sigma^2_* - \mathbf{var}[\Delta\tilde{E}_{jk}]).
\end{align}
Intuitively, with $\Delta\tilde{E}_{jk}$ evaluated in \eqref{eq:DeltaE}, one draws $z_{\mathscr{C}}$ from $\hat{q}_{\mathscr{C}}$ and the Gaussian noise $z_{\mathscr{N}_*}$ from $\mathscr{N}(0, \sigma^2_* - \mathbf{var}[\Delta\tilde{E}_{jk}])$, then examines the sign of $z_{\mathscr{C}} + z_{\mathscr{N}_*} + \Delta\tilde{E}$; the attempt of exchange is accepted when the sum gives a positive outcome. Algorithm \ref{alg:protocol} summarizes the protocol.

Given $\sigma^2_*$ and $\lambda$, we can pin down each of the terms within the series \eqref{eq:compensation}. There is a nice property of the logistic derivatives $g^{(k)}$ that all $g$'s derivatives can be formulated as polynomials in terms of $g$ itself, and coefficients is extracted from the \emph{Worpitzky Triangle}. 
In Appendix B, we compute the first three Hermite polynomials as well as the logistic derivatives of odd orders.

The parameter setting of experiment is $\sigma^2_* = 0.2$ and $\lambda = 10$. We truncate the infinite series in \eqref{eq:compensation} and takes the first 3 terms to assemble an approximated solution. The compensation is implemented as
{
\begin{align}
\label{eq:compensation-impl}
\hat{q}_{\mathscr{C}} \approx 0.895g - 0.145g^2 - 2.1g^3 + 2.55g^4 - 1.8g^5 + 0.6g^6,
\end{align}
}%
where $g(z) = 1/[1 + e^{-z}]$ denotes the logistic function. 
Note that we have conducted numerical evaluation on truncated series: with appropriate $\sigma^2_*$ and $\lambda$, the convergence of \eqref{eq:compensation-impl} with 3 terms is fast; using more terms is feasible but may not be advantageous by overall computation cost.


\begin{algorithm}[t]
\caption{Replica-exchange Nos\'e-Hoover dynamics}
\label{alg:renhd}
{
\begin{algorithmic}[1]
\Function{NHDynamics}{$\{\theta_j\}, \{T_j\}, \mathtt{model}, \mathscr{D}, |\mathscr{S}|_{\mathrm{nhd}}, N, \epsilon, c$}\Comment{NHD length $N$; $\epsilon,c$ in \eqref{eq:change-of-variables}}
\ForAll{$\{j\}$}\Comment{all $j$ running in parallel}
\State{$v_j \sim \mathscr{N}(0, T_j\epsilon)$ and $s_j \gets c\big/T_j$}
\For{$n = \Call{range}{1,N}$}
\State{$\mathscr{S} \gets \Call{nextBatch}{~\mathscr{D}, |\mathscr{S}|_{\mathrm{nhd}}~}$}
\State{$\tilde{f}_j \gets \mathtt{model}.\Call{backward}{~\theta_j, \mathscr{S}~}$}\Comment{see \eqref{eq:f}}
\State{$v_j \gets v_j + \tilde{f}_j\epsilon - s_jv_j + \mathscr{N}(0, 2c\epsilon)$}\Comment{see \eqref{eq:nhd-impl}}
\State{$\theta_j \gets \theta_j + v_j$ then $s_j \gets s_j + \big[ v_j^{\top}v_j/d - T_j\epsilon \big]$}
\EndFor
\EndFor
\State{\textbf{return}~~$\{\theta_j\}$}
\EndFunction
\State{\Call{main}{}:}
\State{$\{\theta_j\} \gets \Call{randn}{{}}$}\Comment{initialization}
\State{$\mathtt{args} \gets \big(~|\mathscr{S}|_{\mathrm{nhd}}, N, \epsilon, c\,\big)$}\Comment{packing arguments}
\Loop
\State{$\{\theta_j\} \gets \Call{NHDynamics}{\{\theta_j\}, \{T_j\}, \mathtt{model}, \mathscr{D}, \mathtt{args}}$}
\State{$\{(j, k)\} \gets \Call{rand}{{}}$}\Comment{replicas to swap}
\ForAll{$\{(j,k)\}$}
\State{\Call{Exchange}{$\theta_j$, $\theta_k$, $\mathtt{model}$, $\mathscr{D}$, $|\mathscr{S}|_{\mathrm{re}}$, $\sigma^2_*$, $\tilde{q}_{\mathscr{C}}$}}
\EndFor
\State{$\mathtt{samples} \gets \big[\mathtt{samples}, \theta_0\big]$}\Comment{$\theta_0$ from \emph{replica 0} for true posterior}
\EndLoop
\end{algorithmic}
}%
\end{algorithm}



Compared with the numerical treatment proposed by \cite{DBLP:conf/uai/SeitaPCC17}, our analytic approach is easier to reproduce and much faster to sample: for any given precision, one can readily re-compute the compensation distribution by using more terms of Hermite polynomials and logistic derivatives (see Appendix B), instead of invoking the entire numerical procedure. Empirical evaluation shows $20\times$ acceleration in sampling using our analytic approach with the Gibbs sampler; the numerical solution in comparison uses the pre-computed density\footnote{https://github.com/BIDData/BIDMach} and the conventional methods, \emph{i.e.} binary search and hash tables.

Now we turn to the implementation of the Nos\'e-Hoover dynamics and the temperature ladder for replicas to run. Recall the dynamics in \eqref{eq:nhd}, the inherit noise of mini-batch gradient can be seperated by the Nos\'e-Hoover thermostat. And the correct canonical distribution can be recovered as stated in Theorem \ref{thm:invariant}, if the system is \emph{ergodic}. However, there are some concerns regarding the ergodicity of the Nos\'e-Hoover dynamics \cite{martyna1992nose}. We alleviate this issue by introduce additive Gaussian noise such that the dynamics becomes more ``stochastic''. So we modify the dynamics for momentum $p$ in \eqref{eq:nhd} as
\begin{align}
\label{eq:nhd-impl}
\diff{p_j} = \big[ \tilde{f}(\theta_j) - \xi_jp_j \big]\diff{t} + \mathscr{N}(0, 2C\diff{t}),
\end{align}
where the additive Gaussian noise has a pre-defined, constant intensity $C > 10B$ in \eqref{eq:fdt}. With non-vanishing time steps $\Delta{t}$, we make a change of variables for each replica $j$:
\begin{align}
\label{eq:change-of-variables}
\mbox{variables}&\quad v_j \coloneqq p_j\Delta{t},~s_j \coloneqq \xi_j\Delta{t},~~~~\quad\mbox{constants}~~\quad\epsilon \coloneqq \Delta{t}^2,~c \coloneqq C\Delta{t}.
\end{align}
For the temperature ladder, we prefer a simpler scheme with the surest bet that the temperature on each rung increases geometrically as suggested by \cite{kofke2002acceptance} and \cite{nagata2008asymptotic} so that the ladder $\{T_j\}$ of $M$ rungs is
{
\begin{align}
\label{eq:geometric}
T_j = \tau^j~\mbox{with}~\tau > 1~\mbox{and}~j = 1,2,\dots,M.
\end{align}
}%
Algorithm \ref{alg:renhd} describes the implementation of RENHD.

\section{Experiment}
\label{sec:experiment}

We conduct two sets of experiments: the first uses synthetic distributions, validating the properties of RENHD; the second is on real datasets, showing drastic improvement in classification accuracy. 


\subsection{Synthetic distributions}

To validate the efficacy of RENHD, we perform a sampling test on a synthetic $2d$ Gaussian mixture with $5$ isolated modes. The potential energy and its gradient is perturbed by zero-mean Gaussian noise with variance $\sigma^2 = 0.25$ which stays unknown for samplers. A temperatures ladder is established with $M = 7$ rungs and geometric factor $\tau = 1.5$,
We compare the sampled histogram with SGNHT, as a non-tempered alternative, and Normalizing Flow (NF) \cite{rezende2015variational}, which is a typical variational method. Figure \ref{fig:demo2d} demonstrates that REHND has accurately sampled the target multimodal distribution in the presence of mini-batch noise. On the contrary, SGNHT and NF failed to discover the isolated modes; the latter deviates severely due to the noise, resulting in a spread histogram. We have depicted the sampling trajectory above for the samplers, indicating a good mixing property of RENHD against SGNHT. 
The effective sample size (ESS) of RENHD is $4.1638\times 10^3 / 10^5$.


\begin{figure*}[t]
\centering
\includegraphics[width=0.90\linewidth]{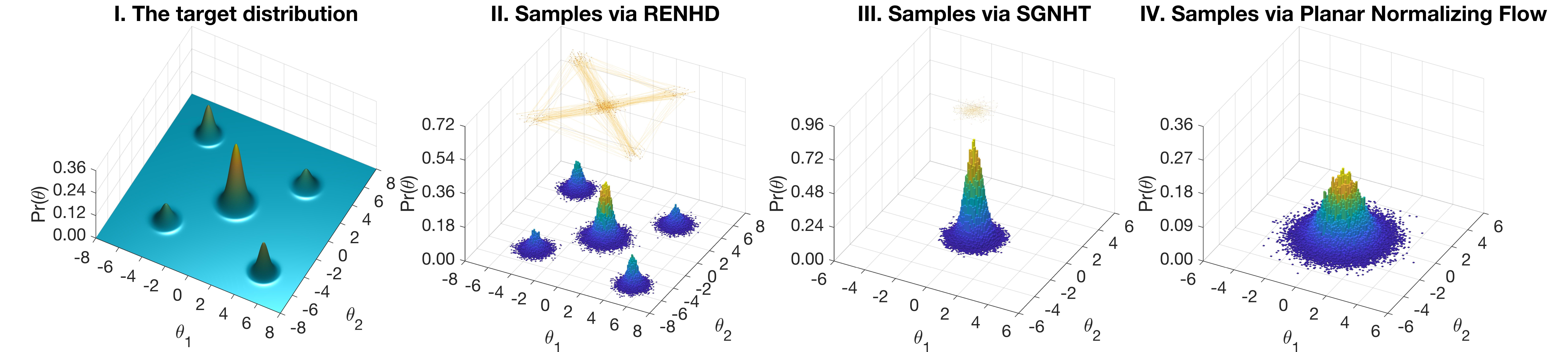}
\caption{Experiment on sampling a $2d$ mixture of 5 Gaussians.}
\label{fig:demo2d}
\vskip 5pt%
\end{figure*}


\subsection{Bayesian learning on real images with convolutional and recurrent neural networks}

We run two tasks of image classification on real datasets: Fashion-MNIST 
on a recurrent neural network and CIFAR-10 
on a residual network (ResNet) \cite{he2016deep}, where we are focusing on finding good modes on multimodal posteriors.
The performance is compared on the accuracy of classification.

\paragraph{Baselines.}
SGLD, 
SGHMC, 
SGNHT, 
and TACT-HMC 
are chosen as alternatives of Bayesian samplers; the SGD with momentum \cite{sutskever2013importance} as well as Adam \cite{DBLP:journals/corr/KingmaB14} are compared, as the typical methods for training neural nets. To validate the efficacy of the Nos\'e-Hoover dynamics, we devise the ``replica-exchange Langevin dynamics'' (RELD) with the same tempering setting by freezing the thermostat $s_j$ in RENHD at the value of $0.999 + c/T_j$, approximating a tempered version of SGLD for a very light particle.  
All 7 baselines are tuned to their best on each task; the samplers' accuracy is calculated from Monte Carlo integration on sampled models; the optimizers are evaluated after training. 

\paragraph{Settings.}
For all methods, a single run has $1000$ epochs. Random permutation is applied to a percentage (0\%, 20\%, or 30\%) of the training labels, at the beginning of each epoch as suggested by \cite{luo2018thermostat} to increase the model uncertainty. 
We set the mini-batch size $|\mathscr{S}|_{\mathrm{nhd}} = 128$ for the Nos\'e-Hoover dynamics and $|\mathscr{S}|_{\mathrm{re}} = 256$ for the exchange protocol. The ladder is built with $M = 12$ rungs with geometric factor $\tau = 1.2$ such that the rate of exchange in the experiment is roughly $30\% \sim 40\%$. For the dynamic parameters, the additive Gaussian intensity $c = 0.1$ and the step size $\epsilon = 5\times 10^{-6}$ in \eqref{eq:change-of-variables}. To propose a new sample, the dynamics will simulate a trajectory of length $N = 200$.

\paragraph{Model architectures.}
The RNN for Fashion-MNIST contains one LSTM layer \cite{hochreiter1997long} as the first layer, with the input/output dimensions of $28/128$. It takes as the input via scanning a $28\times 28$ image vertically each line of a time. After $28$ scanning steps, the LSTM outputs a representative vector of size $128$ into ReLU activation, which is followed by a dense layer of size $64$ with ReLU activation. The prediction on $10$ categories is generated by softmax activation in the output layer.
The ResNet for CIFAR-10 consists of 20 standard residual blocks \cite{he2016deep}: each contains two ``$2d$-Conv + BatchNorm (BN)'' pairs (see BN in \cite{ioffe2015batch}), seperated by ReLu. It is then wrapped by an identity shortcut, \emph{i.e.} a residual connection, to calculate the residues. All blocks are cascaded. Final output is from a softmax layer. The accuracy is tested with BN layers in test mode.

\paragraph{Discussion.}

RENHD outperforms all non-tempered baselines by a relatively large margin due to the incorporation of tempering. Even in comparison with TACT-HMC, another tempered sampler, RENHD still maintains better performance due to its higher tempering efficiency: RENHD constantly generates correct samples in parallel to fast $\theta$-space exploration in high temperatures, while TACT-HMC has a sequential tempering procedure so that its exploration has to wait until tempering is roughly finished. 
Moreover, RENHD has much simpler dynamics and fewer hyperparameters, reducing $60\%$ computation for one step of simulation against TACT-HMC. 
RELD's performance validates the previous discussion that the Langevin dynamics may not be apt for rapid $\theta$-space exploration due to its random-walk-like updates; this disadvantage even limited the effect of a well-tuned tempering scheme possibly because remote modes have never been reached. Hence, we believe that RENHD is of much more practical interest for its virtue of easy implementation, fast tuning, and high tempering efficiency. 
The result is summarized in Table \ref{tbl:result}, where the average accuracy of classification is reported with variances calculated from $10$ independent runs; for optimizers, \emph{i.e.} momentum SGD and Adam, random initializations are applied to the same network architecture at the beginning of every single run. It took  $~2.5$ hours for the replica ensemble to find a good mode on a single Titan Xp.
On the other hand, although RENHD offers better sampling efficiency, it comes with cost of multiplicative space complexity compared with single-replica methods. We analyze, in Appendix F, that number of replicas has to increase in $\sqrt{d}$ of $\theta$'s dimension  to retain a proper acceptance probability during replica exchange.

\begin{table*}[t]
\caption{Result of Bayesian learning experiments on real datasets.}
\label{tbl:result}
\centering
\scalebox{0.735}{
\begin{threeparttable}
\begin{tabular}{l r r r r r r}
\Xhline{1.2pt}
                                            & \multicolumn{3}{c}{RNN on Fashion-MNIST}   & \multicolumn{3}{c}{ResNet on CIFAR-10}\Tstrut\Bstrut\\
\cline{2-7}
\% permuted labels                          &     $0\,\%$ &    $20\,\%$ &    $30\,\%$ &     $0\,\%$ &    $20\,\%$ &    $30\,\%$\Tstrut\Bstrut\\
\Xhline{0.7pt}
Adam     & $88.56 \pm 0.13\,\%$ & $87.93 \pm 0.18\,\%$ & $87.22 \pm 0.23\,\%$ & $86.03 \pm 0.12\,\%$ & $80.08 \pm 0.14\,\%$ & $77.01 \pm 0.16\,\%$\Tstrut\\
momentum SGD  & $88.83 \pm 0.11\,\%$ & $88.05 \pm 0.19\,\%$ & $87.58 \pm 0.20\,\%$ & $86.11 \pm 0.12\,\%$ & $79.35 \pm 0.12\,\%$ & $77.51 \pm 0.15\,\%$      \\
SGLD               & $89.01 \pm 0.13\,\%$ & $88.25 \pm 0.14\,\%$ & $87.85 \pm 0.13\,\%$ & $87.38 \pm 0.14\,\%$ & $81.16 \pm 0.13\,\%$ & $78.19 \pm 0.15\,\%$     \\
RELD             & $89.05 \pm 0.13\,\%$ & $88.31 \pm 0.14\,\%$ & $87.92 \pm 0.17\,\%$ & $87.51 \pm 0.13\,\%$ & $81.19 \pm 0.12\,\%$ & $78.26 \pm 0.13\,\%$     \\
SGHMC               & $89.12 \pm 0.12\,\%$ & $88.23 \pm 0.16\,\%$ & $87.89 \pm 0.19\,\%$ & $87.50 \pm 0.14\,\%$ & $81.37 \pm 0.15\,\%$ & $78.21 \pm 0.14\,\%$     \\
SGNHT                 & $89.33 \pm 0.18\,\%$ & $88.76 \pm 0.22\,\%$ & $88.04 \pm 0.19\,\%$ & $87.96 \pm 0.13\,\%$ & $82.13 \pm 0.17\,\%$ & $78.54 \pm 0.18\,\%$      \\
TACT-HMC             & $89.74 \pm 0.13\,\%$ & $88.83 \pm 0.17\,\%$ & $88.78 \pm 0.17\,\%$ & $88.01 \pm 0.13\,\%$ & $82.28 \pm 0.13\,\%$ & $79.43 \pm 0.12\,\%$\Bstrut\\
\Xhline{0.7pt}
RENHD in Alg. \ref{alg:renhd}                                     & $\mathbf{90.87} \pm 0.12\,\%$ & $\mathbf{89.45} \pm 0.17\,\%$ & $\mathbf{89.06} \pm 0.16\,\%$ & $\mathbf{88.41} \pm 0.12\,\%$ & $\mathbf{84.48} \pm 0.14\,\%$ & $\mathbf{82.65} \pm 0.13\,\%$\Tstrut\Bstrut\\
\Xhline{1.2pt}
\end{tabular}
\end{threeparttable}
}
\end{table*}

\section{Conclusion}

We propose a new sampler, RENHD, as the first replica-exchange method applicable to mini-batch settings, which can rapidly draw representative samples from complex posterior distributions with \emph{multiple isolated modes} in the presence of \emph{mini-batch noise}. It simulates a ladder of replicas in different temperatures, and alternating between subroutines of evolving the Nos\'e-Hoover dynamics using the mini-batch gradient and performing configuration exchange based on noise-aware test of acceptance. Experiments are conducted to validate the efficacy and demonstrate the effectiveness; it outperforms all baselines compared by a significant improvement on the accuracy of image classification with different types of neural networks. The results have shown the potential of facilitating deep Bayesian learning on large datasets where multimodal posteriors exist.

\clearpage
\section*{Broader Impact}
This paper proposes a practical solution to Bayesian learning. By simulating a collection of replicas at different temperatures, the proposed solution is able to efficiently draw samples from complex posterior distributions. Consequently, the performance of Bayesian learning is improved. Since Bayesian learning is a common tool for many machine learning problems, the social and ethical impacts of this solution are upon specific applications. 

\section*{Funding Disclosure}
Yaodong Yang was employed by Huawei R\&D UK.

{
\bibliography{replica-exchange}
\bibliographystyle{plain}
}


\clearpage

\end{document}